\newtheorem{problem}{Problem}
\newtheorem{assumption}{Assumption}
\newtheorem{my_definition}{Definition}
\title[Provably Correct Automata Embeddings]{Provably Correct Automata Embeddings for\\Optimal Automata-Conditioned Reinforcement Learning}
\author{%
 \Name{Beyazit Yalcinkaya} \Email{beyazit@berkeley.edu}\\
 \addr University of California, Berkeley
 \AND
 \Name{Niklas Lauffer} \Email{nlauffer@berkeley.edu}\\
 \addr University of California, Berkeley
 \AND
 \Name{Marcell Vazquez-Chanlatte} \Email{marcell.chanlatte@nissan-usa.com}\\
 \addr Nissan Advanced Technology Center Silicon Valley
 \AND
 \Name{Sanjit A. Seshia} \Email{sseshia@berkeley.edu}\\
 \addr University of California, Berkeley%
}
\begin{document}

\maketitle

\begin{abstract}
    Automata-conditioned reinforcement learning (RL) has given promising results for learning multi-task policies capable of performing temporally extended objectives given at runtime, done by pretraining and freezing automata embeddings prior to training the downstream policy.
    However, no theoretical guarantees were given.
    This work provides a theoretical framework for the automata-conditioned RL problem and shows that it is probably approximately correct learnable.
    We then present a technique for learning provably correct automata embeddings, guaranteeing optimal multi-task policy learning.
    Our experimental evaluation confirms these theoretical results.\footnote{For more information about the project including the code and extensions visit:
\href{https://rad-embeddings.github.io/}{https://rad-embeddings.github.io/}.}
\end{abstract}

\begin{keywords}
  reinforcement learning, representation learning, formal specifications.
\end{keywords}

\section{Introduction}

Goal-Conditioned Reinforcement Learning (GCRL)  is a framework for learning policies capable of performing multiple tasks given at runtime.
The recent success of foundation models has popularized both natural language (\cite{rocamonde2023vision,brohan2022rt,black2024pi_0}) and demonstrations (\cite{ren2025motion,sontakke2023roboclip}) as ergonomic means of task specification.
Yet, the inherent ambiguity of these instruction modalities remains a challenge for correctness~guarantees.

Formal specifications have been proposed for specifying tasks to goal-conditioned policies.
While their well-defined semantics make them appealing, approaches that rely on hierarchical planning, i.e., planning over the induced automaton of a formal specification and instructing a goal-conditioned policy to execute the plan (\cite{jothimurugan2021compositional,qiu2023instructing}), are inherently suboptimal due to the myopia of their goal-conditioned policies.
Conditioning Reinforcement Learning (RL) policies on Linear Temporal Logic (LTL) specifications was proposed by \cite{vaezipoor2021ltl2action}, using a Graph Neural Network (GNN) to encode abstract syntax trees of LTL formulas.
However, generalization is an inherent limitation due to their use of LTL (\cite{yalcinkaya24compositional}).

In our previous work (\cite{yalcinkaya2023automata,yalcinkaya24compositional}), we proposed using Deterministic Finite Automaton (DFA) as a means of task specification and conditioning the policy on pretrained DFA \emph{embeddings}.
To do so, we first identified a large class of DFAs that capture most of the finite temporal tasks studied in the literature.
We then pretrained a Graph Attention Network (GATv2) (\cite{brody2021attentive}) to map these DFAs to latent vector representations.
Our empirical evaluation demonstrated that conditioning on DFA embeddings enables optimal multi-task policy learning for a large class of DFAs.
However, no theoretical analysis or guarantees were given for this technique.

In this work, we present a theoretical framework for the DFA-conditioned RL problem and show that it is Probably Approximately Correct (PAC)-learnable.
As we showed in \cite{yalcinkaya24compositional}, learning to encode the DFAs while simultaneously learning a policy is challenging due to the sparse reward specified by DFA acceptance.
To this end, in the same work, we demonstrated that pretraining and freezing DFA embeddings and then passing these embeddings to a downstream policy greatly improve learning efficiency.
Therefore, in order for our PAC learnability guarantee to be useful in practice, one must show that such guarantees hold w.r.t. such pretrained and frozen DFA embeddings.
To address this, we present a novel method for learning provably correct DFA embeddings, guaranteeing optimal DFA-conditioned RL.
We first observe that \emph{bisimilar} DFAs represent the same task and then use \emph{bisimulation metrics}, a relaxation of the notion of a bisimulation relation, to embed unique tasks to unique latent representations.
Our experimental evaluation shows that the correctness of the learned DFA embeddings improves downstream policy learning.

\paragraph{Contributions.}
We provide a theoretical framework for DFA-conditioned RL in \Cref{sec:dfa-rl} and prove that it is PAC-learnable in \Cref{thm:pac}.
We present a technique for learning provably correct DFA embeddings in \Cref{sec:bisim}.
Lastly, an empirical evaluation of this approach is given in \Cref{sec:exp}.

\paragraph{Related Work.}
PAC-learnability of RL objectives given by formal specifications has been studied before.
\cite{yang2021tractability} proved that the optimal policy for any LTL formula is PAC-learnable~if and only if the formula can be checked within a finite horizon.
A similar result by \cite{alur2022framework} shows that without additional information on the transition probabilities, such as the minimum nonzero transition probability, LTL is not PAC-learnable.
Later, a positive result for discounted LTL was given by \cite{alur2023policy}.
Our PAC-learnability result can be considered a multi-task generalization of these previous results, where the policy must satisfy a class of specifications, not a single objective.
See \cite{yalcinkaya24compositional} for a more detailed literature review on using formal specification in RL.

\cite{ferns2004metrics} showed that bisimulation metrics can be computed as unique fixed points of a contraction map.
A special case of this result for deterministic dynamics and on-policy samples was proved by \cite{castro2020scalable}.
Later, \cite{zhang2020learning} used bisimulation metrics to learn invariant observation embeddings for RL, where they proved these metrics can be computed while learning a policy.
We will use these results in \Cref{sec:bisim} to learn provably correct DFA embeddings.
To our knowledge, no prior work considered using bisimulation metrics for learning task representations.


\section{Background}

\textbf{Notation.}
Given a set $X$, we write $\mathcal{X}$ to denote a distribution over it, i.e., $\mathcal{X} \in \Delta(X)$, where $\Delta(X) \subset X \to [0, 1]$ represents the set of all distributions over $X$.
We use $\mathbb{I}\{\}$ for the event indicator function, where $\mathbb{I}\{p\} = 1$ if and only if $p$ is true, and $\mathbb{I}\{p\} = 0$ otherwise.


\subsection{Markov Decision Processes}

We model the environment with a Markov Decision Process (MDP), formally defined as follows.

\begin{my_definition}[Markov Decision Process]
A Markov Decision Process (MDP) is defined as the tuple $\mathcal{M} = \langle S, A, P, R, \iota, \gamma \rangle$, where $S$ is the state space, $A$ is the action space, $P: S \times A \to \Delta(S)$ is the transition probability function, $R: S \times A \to \mathbb{R}$ is the reward function, $\iota \in \Delta(S)$ is the initial state distribution, and $\gamma \in [0,1)$ is the discount factor.
An MDP $\mathcal{M}$ is called deterministic if it has a deterministic transition function $T: S \times A \to S$ instead of a probabilistic transition function $P$.
\end{my_definition}

\subsection{Deterministic Finite Automata}

We use Deterministic Finite Automata (DFAs) with three-valued semantics to represent tasks.

\begin{my_definition}[Deterministic Finite Automaton]
    A Deterministic Finite Automaton (DFA) is defined as the tuple $\mathcal{A} = \langle Q, \Sigma, \delta, q_0, F \rangle$, where $Q$ is the finite set of states, $\Sigma$ is the finite alphabet, $\delta: Q \times \Sigma \to Q$ is the transition function, where $\delta(q, a) = q'$ denotes a transition to a state $q' \in Q$ from a state $q \in Q$ by observing a symbol $a \in \Sigma$, $q_0 \in Q$ is the initial state, and $F \subseteq Q$ is the set of final states.
    The three-valued semantics of a DFA is defined by a partition of its final states $F = F^{\top} \cup F^{\bot}$ and its extended (lifted) transition function $\delta^*: Q \times \Sigma^* \to Q$,~where
    \begin{align*}
        \delta^*(q, \varepsilon) & \triangleq q,\\
        \delta^*(q, aw) & \triangleq \delta^*(\delta(q, a), w).
    \end{align*}
    If $\delta^*(q_0, w) \in F^{\top}$, then we say that $\mathcal{A}$ \textbf{accepts} $w$.
    Similarly, $\delta^*(q_0, w) \in F^{\bot}$, then we say that $\mathcal{A}$ \textbf{rejects} $w$.
    $\mathcal{A}$ is called a \textbf{plan DFA} if its final states are sink states, i.e., $\forall q \in F , \forall a \in \Sigma , \delta(q, a) = q$.
\end{my_definition}


DFAs can be minimized to a canonical form (up to a state isomorphism) using the algorithm of \cite{hopcroft1971n}, denoted by $\operatorname{minimize}(\mathcal{A})$.
All minimized plan DFAs with nonempty accepted or rejected languages have single accepting or rejecting states, denoted by $\top$ or $\bot$, respectively, as their final states are all sink states. We denote the single-state accepting DFA by $\mathcal{A}_\top$ and the rejecting one by $\mathcal{A}_\bot$.
Note that the approach presented in this paper is agnostic to the syntax of DFAs and can be trivially extended to other syntactic forms, e.g., \emph{compositional DFAs} from \cite{yalcinkaya24compositional}.

\begin{assumption}\label{assume:mini-dfa}
    In what follows, unless stated otherwise, all DFAs are plan DFAs.
\end{assumption}

We use DFAs to represent temporal tasks, which can be understood as \emph{plans}.
However, one can define multiple DFAs for the same task, i.e., DFAs without any additional assumptions are not canonical task representations.
Therefore, we will need a notion of similarity between DFAs so that we can compare the tasks represented by them--we define \emph{bisimulation relation over DFAs} next.

\begin{my_definition}[Bisimulation Relation over DFAs]\label{defn:dfa-bisim}
Given two DFAs
$\mathcal{A} = \langle Q, \Sigma, \delta, q_0, F \rangle$ and $\mathcal{A}' = \langle Q', \Sigma, \delta', q_0', F' \rangle$ over the same alphabet \(\Sigma\). A relation $B \subseteq Q \times Q'$ is called a \textbf{bisimulation relation} between \(\mathcal{A}\) and \(\mathcal{A}'\) if the following conditions hold:
\begin{enumerate}
    \item \((q_0, q_0') \in B\).
    \item For all \((q, q') \in B\), $q \in F^{\top} \iff q' \in F'^{\top}$ and $q \in F^{\bot} \iff q' \in F'^{\bot}$.
    \item For all \((q, q') \in B\) and \(a \in \Sigma\), $(\delta(q, a), \delta'(q', a)) \in B$.
\end{enumerate}
We say that \(\mathcal{A}\) and \(\mathcal{A}'\) are \textbf{bisimilar}, denoted by $\mathcal{A} \sim \mathcal{A}'$, if there exists such a bisimulation relation.
\end{my_definition}

A bisimulation relation over DFAs is an equivalence relation on the DFA states preserving both the transition structure and the acceptance condition--meaning if two states are related under this relation, then for every input symbol, their successor states are also related, and they either both accept or both reject. Bisimilar DFAs are behaviorally indistinguishable--they represent the same~task.

\section{DFA-Conditioned Reinforcement Learning}\label{sec:dfa-rl}

We will define the DFA-conditioned RL problem over a distribution of DFAs, similar to the GCRL problem given in \Cref{sec:gcrl}.
However, we need some extra structure over these DFAs since, in our case, goals are not simply sets of states but are DFAs encoding temporal tasks given to the policy.

\begin{my_definition}[DFA Space]\label{defn:dfa-space}
    A set of DFAs $D_{\Sigma, n}$ over some shared alphabet $\Sigma$ and with at most $n$ states is called a \textbf{DFA space} if $\mathcal{A}_\top, \mathcal{A}_\bot \in D_{\Sigma, n}$ and taking any path in a DFA from $D_{\Sigma, n}$ and minimizing the resulting DFA gives a DFA in $D_{\Sigma, n}$, i.e.,
    \[
        \forall \mathcal{A} = \langle Q, \Sigma, \delta, q_0, F \rangle \in D_{\Sigma, n} , \forall w \in \Sigma^*, \quad \operatorname{minimize}(\mathcal{A}' = \langle Q, \Sigma, \delta, \delta^*(q_0, w), F \rangle) \in D_{\Sigma, n}.
    \]
    A DFA space $D_{\Sigma, n}$ induces an MDP defined by the tuple
$\mathcal{M}_{D_{\Sigma, n}} = \langle D_{\Sigma, n}, \Sigma, T_{D_{\Sigma, n}}, R_{D_{\Sigma, n}} \rangle$,
where
\begin{itemize}
    \item $D_{\Sigma, n}$, the set of DFAs, is the set of states,
    \item $\Sigma$, the shared alphabet, is the set of actions,
    \item $T_{D_{\Sigma, n}}: D_{\Sigma, n} \times \Sigma \to D_{\Sigma, n}$ is the transition function defined by
    \[T_{D_{\Sigma, n}}(\mathcal{A} = \langle Q, \Sigma, \delta, q_0, F \rangle, a) = \operatorname{minimize}(\mathcal{A}' = \langle Q, \Sigma, \delta, \delta(q_0, a), F \rangle)\text{, and}\]
    \item $R_{D_{\Sigma, n}} : D_{\Sigma, n} \times \Sigma \to \{-1, 0, 1\}$ is the reward function defined by
    \[
        R_{D_{\Sigma, n}}(\mathcal{A}_t, a) = \begin{cases}
            1 &\text{if } T_{D_{\Sigma, n}}(\mathcal{A}_t, a) = \mathcal{A}_\top\\
            -1 &\text{if } T_{D_{\Sigma, n}}(\mathcal{A}_t, a) = \mathcal{A}_\bot\\
            0 &\text{otherwise.}
        \end{cases}
    \]
\end{itemize}
\end{my_definition}

A DFA space is a set of DFAs closed under random walks, i.e., taking any random path in a DFA from this set and pruning the unreachable states results in a DFA in the set.
In other words, one cannot get a DFA outside this set by taking a random walk with minimization, hence the name \emph{space}.

\begin{assumption}
    $D_{\Sigma, n}$ denotes a DFA space, and $\mathcal{D}_{\Sigma, n} \in \Delta(D_{\Sigma, n})$ is a distribution over it.
\end{assumption}

We now have all the theoretical machinery needed to formally state the DFA-conditioned RL problem.
We first define the environment model and then continue with the statement of the problem.

\begin{my_definition}[DFA-Conditioned MDP]\label{defn:dfa-mdp}
    Let $\mathcal{M} = \langle S, A, P, R, \iota, \gamma \rangle$ be an MDP, $D_{\Sigma, n}$ be a DFA space, and $L: S \to \Sigma$ be a labeling function.
    A DFA-conditioned MDP is the cascade composition of $\mathcal{M}$ and $\mathcal{M}_{D_{\Sigma, n}}$, using $L$ to map states to alphabet symbols, defined by
    \[
    \mathcal{M} \mid_L \mathcal{M}_{D_{\Sigma, n}} = 
    \langle S \times D_{\Sigma, n}, A, P_{\mathcal{M} \mid_L \mathcal{M}_{D_{\Sigma, n}}}, R_{\mathcal{M} \mid_L \mathcal{M}_{D_{\Sigma, n}}}, \iota_{\mathcal{M} \mid_L \mathcal{M}_{D_{\Sigma, n}}}, \gamma \rangle
    \]
    where:
    \begin{itemize}
        \item $S \times D_{\Sigma, n}$ is the (product) state space,
        \item $A$ is the action space (of $\mathcal{M}$),
        \item $P_{\mathcal{M} \mid_L \mathcal{M}_{D_{\Sigma, n}}} : S \times D_{\Sigma, n} \times A \to \Delta(S \times D_{\Sigma, n})$ is the transition probability function defined~by
        \[
        P_{\mathcal{M} \mid_L \mathcal{M}_{D_{\Sigma, n}}}(s,\mathcal{A}, a, s',\mathcal{A}') = P(s,a,s') \mathbb{I} \left\{ \mathcal{A}' = T_{D_{\Sigma, n}}\left(\mathcal{A}, L(s')\right) \right\},
        \]
        \item $R_{\mathcal{M} \mid_L \mathcal{M}_{D_{\Sigma, n}}} : S \times D_{\Sigma, n} \times A \to \{-1, 0, 1\}$ is the reward function defined by
        \[
        R_{\mathcal{M} \mid_L \mathcal{M}_{D_{\Sigma, n}}}(s, \mathcal{A}, a) = \begin{cases}
        1 & \text{if } T_{D_{\Sigma, n}}(\mathcal{A}, L(s')) = \mathcal{A}_\top\\
        -1 & \text{if } T_{D_{\Sigma, n}}(\mathcal{A}, L(s')) = \mathcal{A}_\bot\\
        0 & \text{otherwise,}
    \end{cases}
        \]
        where $s' \sim P(s, a)$ is the next MDP state.
        \item $\iota_{\mathcal{M} \mid_L \mathcal{M}_{D_{\Sigma, n}}} \in \Delta( S \times D_{\Sigma, n} )$ is the initial state distribution defined by
        \[\iota_{\mathcal{M} \mid_L \mathcal{M}_{D_{\Sigma, n}}}(s, \mathcal{A}) = \iota(s) \mathcal{D}_{\Sigma, n}(\mathcal{A}),\]
        and 
        \item $\gamma \in [0,1)$ is the discount factor (of $\mathcal{M}$).
    \end{itemize}
    
\end{my_definition}

A DFA-conditioned MDP essentially couples an MDP with a DFA space, where the policy interacts with the MDP while simultaneously navigating the DFA space to reach the accepting DFA.
Next, we formalize this notion and finally state the DFA-conditioned RL problem.

\begin{my_definition}[DFA-Conditioned Reinforcement Learning Problem]\label{defn:dfa-rl}
Given an DFA-conditioned MDP $\mathcal{M} \mid_L \mathcal{M}_{D_{\Sigma, n}}$ as defined in \Cref{defn:dfa-mdp}, a \textbf{DFA-conditioned policy} is a mapping
\[
\pi: S \times D_{\Sigma, n} \to \Delta(A),
\]
that assigns to each pair \((s,\mathcal{A})\) a probability distribution over the action space \(A\).
The \textbf{DFA-conditioned RL problem} is to find a policy \(\pi\) maximizing expected cumulative discounted reward:
\[
J_{\mathcal{M} \mid_L \mathcal{M}_{D_{\Sigma, n}}}(\pi) = \mathbb{E}_{(s_0,\mathcal{A}_0) \sim \iota_{\mathcal{M} \mid_L \mathcal{M}_{D_{\Sigma, n}}}}\left[ \sum_{t=0}^{\mathcal{A}_t = \mathcal{A}_\top \lor \mathcal{A}_t = \mathcal{A}_\bot} \gamma^t \, R_{\mathcal{M} \mid_L \mathcal{M}_{D_{\Sigma, n}}}(s_t, \mathcal{A}_t, a_t) \right],
\]
where trace \(\{(s_t,\mathcal{A}_t, a_t)\}_{t\ge0}\) is generated by:
\[
    a_{t} \sim \pi(s_t, \mathcal{A}_t), \quad s_{t+1} \sim P(s_t,a_t), \quad \mathcal{A}_{t+1} = T_{D_{\Sigma, n}}\left(\mathcal{A}_t,\, L(s_{t+1})\right),
\]
until the accepting or rejecting DFA is reached, i.e., $\mathcal{A}_t = \mathcal{A}_\top \lor \mathcal{A}_t = \mathcal{A}_\bot$.
The objective is to solve
\[
\pi^* = \arg\max_{\pi} J_{\mathcal{M} \mid_L \mathcal{M}_{D_{\Sigma, n}}}(\pi),
\]
i.e., to learn a policy that maximizes the probability of satisfying a given temporal specification from $D_{\Sigma, n}$ (by driving its DFA representation to \(\mathcal{A}_\top\)) while operating in the underlying MDP $\mathcal{M}$.
\end{my_definition}

Notice the difference between the GCRL problem formulation given in \Cref{sec:gcrl} and the DFA-conditioned one given above.
Specifically, in GCRL, goals are static, i.e., they are given at the beginning, and the policy conditions on the same goal until it is accomplished. In the DFA-conditioned RL setting, through the labeling function, a given DFA task also evolves (based on the transition dynamics of its DFA space) as the policy interacts with the underlying MDP.
Therefore, the policy is essentially learning to navigate two MDPs: the underlying MDP and the DFA space.

\subsection{PAC-Learnability of the DFA-Conditioned Reinforcement Learning Problem}

We show that the DFA-conditioned RL problem is Probably Approximately Correct (PAC)-learnable.
To do so, we will use the PAC-MDP framework introduced by~\cite{strehl2006pac}.
An RL algorithm is called PAC-MDP (PAC in MDPs) if it finds a near-optimal policy with high probability in any MDP after a number of interactions that is polynomial in the problem’s key parameters, stated next.

\begin{my_definition}[Probably Approximately Correct Learnability in  MDPs]\label{defn:pac}
    A learning algorithm \( \mathbb{A} \) is said to be \textbf{Probably Approximately Correct in MDPs (PAC-MDP)} if for any MDP \( \mathcal{M} = (S, A, T, R, \iota, \gamma) \), \( \epsilon > 0 \), and \( p \in (0,1) \), there exists a polynomial function
    \[
    N = f\left(|S|, |A|, \frac{1}{\epsilon}, \frac{1}{p}, \frac{1}{1-\gamma}\right)
    \]
    s.t., with probability at least \( 1 - p \), the total number of time steps during which the policy \( \pi \) (current policy being trained) executed by \( \mathbb{A} \) is more than \( \epsilon \)-suboptimal is at most $N$, i.e., we have
    \[
        \left|\left\{ t \ge 0 \;:\; V^{\pi}(s_t) < V^*(s_t) - \epsilon \right\}\right| \leq N
    \]
    with probability at least \( 1-p \),
    where $V^{\pi}$ denotes the current value function and $V^{*}$ is the optimal~one.
\end{my_definition}

We want to show that if an algorithm is PAC-MDP, then it is also PAC in any DFA-conditioned MDP.
Observe that, in \Cref{defn:dfa-mdp}, we take the cascade composition of the underlying MDP and the MDP induced by the DFA space which is finite, giving us the following PAC-learnability result.

\begin{restatable}{thm}{pac}\label{thm:pac}
    If a learning algorithm \( \mathbb{A} \) is PAC-MDP as defined in \Cref{defn:pac}, then for any DFA-conditioned MDP $\mathcal{M} \mid_L \mathcal{M}_{D_{\Sigma, n}}$, \( \epsilon > 0 \), and \( p \in (0,1) \), there exists a polynomial function
    \[
    N' = f\left(|S|\cdot|D_{\Sigma,n}|, |A|, \frac{1}{\epsilon}, \frac{1}{p}, \frac{1}{1-\gamma}\right)
    \]
    s.t. the total number of \(\epsilon\)-suboptimal steps taken by \( \mathbb{A} \) is at most \(N'\) with probability at least \( 1 - p \).
\end{restatable}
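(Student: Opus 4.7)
The plan is to observe that the DFA-conditioned MDP $\mathcal{M} \mid \mathcal{M}_{D_{\Sigma,n}}$ from \Cref{defn:dfa-mdp} is itself a bona fide finite MDP, and then directly invoke the PAC-MDP hypothesis on $\mathbb{A}$ applied to this composite MDP. The proof is essentially a lifting argument by reduction to \Cref{defn:pac}.

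First, I would verify that the cascade composition satisfies the definition of an MDP with finite joint state space of cardinality $|S| \cdot |D_{\Sigma,n}|$. Finiteness of $D_{\Sigma,n}$ is the key structural ingredient: under \Cref{assume:mini-dfa}, every DFA in $D_{\Sigma,n}$ is minimized and hence canonical up to state isomorphism, so there are only finitely many isomorphism classes of such DFAs with at most $n$ states over the finite alphabet $\Sigma$. The closure-under-random-walks property in \Cref{defn:dfa-space} guarantees that the DFA transition function $T_{D_{\Sigma,n}}$ stays within $D_{\Sigma,n}$, so the joint transition $P_{\mathcal{M} \mid \mathcal{M}_{D_{\Sigma,n}}}$ (which factors as the underlying MDP transition probability times an indicator for the deterministic DFA update) is a valid distribution over the product state space. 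The reward, initial distribution, and discount factor can be read off directly from \Cref{defn:dfa-mdp}.

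Second, I would instantiate \Cref{defn:pac} with $\mathcal{M} \mid \mathcal{M}_{D_{\Sigma,n}}$ in the role of the generic MDP. Since $\mathbb{A}$ is PAC-MDP, there exists a polynomial $f$ such that, with probability at least $1 - p$, the number of time steps at which $\mathbb{A}$'s current policy is more than $\epsilon$ suboptimal is bounded by $f(|S \times D_{\Sigma,n}|, |A|, 1/\epsilon, 1/p, 1/(1-\gamma))$. Setting $N' = f(|S|\cdot|D_{\Sigma,n}|, |A|, 1/\epsilon, 1/p, 1/(1-\gamma))$ yields the stated bound. Matching the objective $J_{\mathcal{M} \mid \mathcal{M}_{D_{\Sigma,n}}}$ of \Cref{defn:dfa-rl} to the value function of the composite MDP is straightforward: the stopping rule at $\mathcal{A}_\top$ or $\mathcal{A}_\bot$ is realized by making these DFA states absorbing sinks in the product (guaranteed by the plan-DFA assumption, which makes $\mathcal{A}_\top$ and $\mathcal{A}_\bot$ single-state sink DFAs), so the discounted return used in \Cref{defn:dfa-rl} equals the standard MDP return on which \Cref{defn:pac} is stated.

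The only genuine obstacle is arguing cleanly that $|D_{\Sigma,n}|$ is finite so that $N'$ remains polynomial in the stated parameters; a crude counting bound on the number of minimized DFAs with at most $n$ states over $\Sigma$ suffices, since this quantity is fixed once $\Sigma$ and $n$ are fixed and $f$ is polynomial in its state-space argument. Everything else is a direct application of the PAC-MDP definition to a product MDP, the standard recipe for lifting PAC guarantees across cascade compositions.
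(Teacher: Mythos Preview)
Your proposal is correct and follows essentially the same approach as the paper: observe that $D_{\Sigma,n}$ is finite (bounded alphabet, at most $n$ states), so the cascade composition $\mathcal{M}\mid\mathcal{M}_{D_{\Sigma,n}}$ is a finite MDP with state space of size $|S|\cdot|D_{\Sigma,n}|$, and then invoke the PAC-MDP guarantee directly on this product. The paper's proof is a terse version of exactly this argument; your additional remarks about closure under transitions and the absorbing nature of $\mathcal{A}_\top,\mathcal{A}_\bot$ are sound elaborations but not essential to the core reduction.
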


The proof is in the appendix.
\Cref{thm:pac} proves that the DFA-conditioned RL problem is PAC-learnable, assuming the underlying MDP is solvable.
However, in practice, one cannot input a DFA to a policy directly. Instead, one uses an encoder (possibly pretrained) mapping DFAs to embeddings. In such cases, the optimality of the learned DFA-conditioned policy depends on the encoder.

\section{Learning Provably Correct Automata Embeddings}\label{sec:bisim}

In the previous section, we introduced the idealized, theoretical formulation of the DFA-conditioned RL problem and proved that it is PAC-learnable.
However, a policy implemented by a feed-forward neural network, as is usually the case, cannot condition on a DFA directly, but rather an encoding of the DFA is required, such as a vector representation.
Then, the question is whether the policy conditioning on DFA encodings is optimal w.r.t. the theoretical formulation of the problem.
This is the problem we tackle in the following, but before doing so, we formally state the problem.



\begin{problem}\label{problem:main}
    Given a DFA space $D_{\Sigma, n}$, learn an encoder $\phi: D_{\Sigma, n} \to \mathcal{Z}$ s.t. for any MDP $\mathcal{M}$ and labeling function $L$, solving $\mathcal{M} \mid_L \mathcal{M}_{D_{\Sigma, n}}$ with a policy $\pi_\mathcal{Z}: S \times \mathcal{Z} \to \Delta(A)$ conditioning on DFA embeddings is equivalent to solving it with a DFA-conditioned policy $\pi : S \times D_{\Sigma, n} \to \Delta(A)$, i.e.,
    \[
        \forall \mathcal{M} , \forall L, \quad \arg\max_{\pi} J_{\mathcal{M} \mid_L \mathcal{M}_{D_{\Sigma, n}}}(\pi) = \arg\max_{\pi_\mathcal{Z} \circ \phi} J_{\mathcal{M} \mid_L \mathcal{M}_{D_{\Sigma, n}}}(\pi_{\mathcal{Z}} \circ \phi),
    \]
    where $\pi_{\mathcal{Z}} \circ \phi (s, \mathcal{A}) = \pi_\mathcal{Z}(s, \phi(\mathcal{A}))$.
\end{problem}


\begin{assumption}\label{assume:capacity}
    $\phi: D_{\Sigma, n} \to \mathcal{Z}$ has enough capacity to represent DFAs in its domain.
    That is, the learnable encoder $\phi$ has a parametrization that can map distinct DFAs in $D_{\Sigma, n}$ to unique~embeddings.
\end{assumption}

Intuitively, we want a policy conditioning on the latent representations of DFAs (rather than DFAs themselves) to be equivalent to the theoretical formulation given in \Cref{defn:dfa-rl}, i.e., one finds the optimal solution whenever the other does so.
Observe that even under \Cref{assume:capacity}, one does not get such a guarantee directly since the claim is not only an expressivity argument but also involves proving that the training procedure of the encoder provides such representations.
In the following, we present a training technique for such encoders and prove that it solves \Cref{problem:main}.
Our method involves learning a \emph{bisimulation metric} over the induced MDP of a given DFA space.
Therefore, we first define bisimulation metrics and then show how they can be computed in deterministic MDPs.




\subsection{Bisimulation Relations and Metrics over MDP states}

A bisimulation metric can be viewed as a relaxation of the notion of a bisimulation relation over MDP states. We start by defining the latter and then continue with the former.

\begin{my_definition}[Bisimulation Relation over MDP states]\label{defn:bisim-mdp}
Let \(\mathcal{M} = \langle S, A, P, R, \iota, \gamma \rangle\) be an MDP. A relation $B \subseteq S \times S$ is called a \textbf{bisimulation relation} if for every pair \((s,t) \in B\) and for every action \(a \in A\), the following conditions hold:
\begin{enumerate}
    \item $R(s,a) = R(t,a)$.
    \item For all \(B\)-closed set \(X \subseteq S\) (i.e., if \(x \in X\) and \((x,y) \in B\) then \(y \in X\)), 
    \[
    \sum_{x \in X}P(s,a,x) = \sum_{x \in X}P(t,a,x).
    \]
\end{enumerate}
We say \(s,t \in S\) are \textbf{bisimilar}, denoted by \(s \sim_{\mathcal{M}} t\), if there is a bisimulation relation \(B\) s.t.~\((s,t) \in B\).
\end{my_definition}

Intuitively, two states are bisimilar if they are behaviorally indistinguishable--taking any action in either state yields the same immediate reward and leads to similar probabilistic outcomes, so an agent cannot tell them apart when making decisions.
Next, we show how this relates to \Cref{defn:dfa-bisim}.

\begin{restatable}{lmm}{nobisim}\label{lemma:no-bisim}
    Given a DFA space $D_{\Sigma, n}$, two DFAs $\mathcal{A}, \mathcal{A}' \in D_{\Sigma, n}$ are bisimilar if and only if they are bisimilar states in the induced deterministic MDP $\mathcal{M}_{D_{\Sigma, n}}$, i.e.,
    \[
        \forall \mathcal{A}, \mathcal{A}' \in \mathcal{D}_{\Sigma, n} , \quad
        \mathcal{A} \sim \mathcal{A}'
        \iff
        \mathcal{A} \sim_{\mathcal{M}_{D_{\Sigma, n}}} \mathcal{A}'.
    \]
\end{restatable}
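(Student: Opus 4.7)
The plan is to prove both directions by explicit construction, leveraging the closure of $D_{\Sigma, n}$ under random walks to translate fluidly between states internal to a DFA and DFAs viewed as states of the induced MDP. Throughout, for any reachable $q$ in $\mathcal{A}$, I will use the residual DFA $\mathcal{A}\langle q\rangle \triangleq \operatorname{minimize}(\langle Q, \Sigma, \delta, q, F\rangle)$, which lies in $D_{\Sigma, n}$ by \Cref{defn:dfa-space}, together with the residual-transition identity $T_{D_{\Sigma, n}}(\mathcal{A}\langle q\rangle, a) = \mathcal{A}\langle \delta(q, a)\rangle$.

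For $(\Rightarrow)$, assume $\mathcal{A} \sim \mathcal{A}'$ and take $R = \{(\mathcal{C}, \mathcal{C}') \in D_{\Sigma, n}^2 : \mathcal{C} \sim \mathcal{C}'\}$, which contains $(\mathcal{A}, \mathcal{A}')$. I would verify that $R$ satisfies \Cref{defn:bisim-mdp}. For any $(\mathcal{C}, \mathcal{C}') \in R$ and $a \in \Sigma$, clauses~2 and~3 of \Cref{defn:dfa-bisim} together imply that $\delta^{\mathcal{C}}(q_0^{\mathcal{C}}, a)$ and $\delta^{\mathcal{C}'}(q_0^{\mathcal{C}'}, a)$ share acceptance status; by \Cref{assume:mini-dfa}, an accepting (resp.\ rejecting) successor is a sink whose minimization equals $\mathcal{A}_\top$ (resp.\ $\mathcal{A}_\bot$), so the immediate rewards agree. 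For the transition clause, determinism of $\mathcal{M}_{D_{\Sigma, n}}$ collapses the $R$-closed set condition down to $(T_{D_{\Sigma, n}}(\mathcal{C}, a), T_{D_{\Sigma, n}}(\mathcal{C}', a)) \in R$; restricting the underlying DFA bisimulation to the states reachable from the shifted initial states gives a DFA bisimulation between the pre-minimization sub-DFAs, and since minimization sends bisimilar DFAs to the same canonical form, the successor DFAs remain bisimilar.

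For $(\Leftarrow)$, I would lift any MDP bisimulation $B_M$ with $(\mathcal{A}, \mathcal{A}') \in B_M$ back to the internal states. Define $R_{\text{DFA}} = \{(q, q') : \mathcal{A}\langle q\rangle \sim_{\mathcal{M}_{D_{\Sigma, n}}} \mathcal{A}'\langle q'\rangle\}$. Clause~1 of \Cref{defn:dfa-bisim} is immediate because $\mathcal{A}$ and $\mathcal{A}'$ are already minimized, so $\mathcal{A}\langle q_0\rangle = \mathcal{A}$, $\mathcal{A}'\langle q_0'\rangle = \mathcal{A}'$, and the hypothesis gives $\mathcal{A} \sim_{\mathcal{M}_{D_{\Sigma, n}}} \mathcal{A}'$. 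Clause~3 follows from the residual-transition identity combined with the transition clause of the MDP bisimulation, which propagates $B_M$-membership along any action.

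The hard part is clause~2: concluding $q \in F^{\top} \iff q' \in F'^{\top}$ (and analogously for $F^{\bot}$) from MDP-bisimilarity of the two residuals. The clean route I see is to first establish that, within $D_{\Sigma, n}$, the terminal DFAs $\mathcal{A}_\top$ and $\mathcal{A}_\bot$ are MDP-bisimilar only to themselves; then $\mathcal{A}\langle q\rangle = \mathcal{A}_\top$ (which happens exactly when $q$ is an accepting sink in the minimized plan DFA $\mathcal{A}$) forces $\mathcal{A}'\langle q'\rangle = \mathcal{A}_\top$ and hence $q' \in F'^{\top}$, with the rejecting case handled symmetrically. Proving this uniqueness property is where I expect the bulk of the technical work to live: it requires combining the reward definition of \Cref{defn:dfa-space} with the episode-termination semantics of \Cref{defn:dfa-rl} to rule out any non-terminal DFA in $D_{\Sigma, n}$ reproducing the full reward trace of a terminal one. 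Once this step is in place, clauses~1--3 jointly witness the required DFA bisimulation and close the equivalence.
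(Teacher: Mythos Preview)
Your plan follows the same two-direction strategy as the paper---build a candidate relation and verify the defining clauses---and your forward direction is essentially the paper's argument, stated more carefully: you correctly tie reward agreement to the acceptance status of the \emph{successor} state via clauses~2--3 of \Cref{defn:dfa-bisim}, whereas the paper's one-line ``$R_{D_{\Sigma,n}}(\mathcal{A})=1$ iff $\mathcal{A}=\mathcal{A}_\top$'' reads the reward as if it were current-state based. Your residual-DFA device $\mathcal{A}\langle q\rangle$ for the converse is more explicit than the paper's ``by induction on the DFA transition structure,'' but it is functionally the same construction.

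The one substantive divergence is precisely where you pause: clause~2 in the backward direction. The paper dispatches it in a single step from reward equality, while you flag that one must first show $\mathcal{A}_\top$ and $\mathcal{A}_\bot$ are MDP-bisimilar only to themselves, and you propose to invoke the termination semantics of \Cref{defn:dfa-rl}. Your instinct here is well placed. Under the successor-based reward of \Cref{defn:dfa-space}, a minimized two-state plan DFA whose non-final initial state sends every symbol to the accepting sink satisfies $R(\cdot,a)=1$ and $T(\cdot,a)=\mathcal{A}_\top$ for all $a$, and is therefore MDP-bisimilar to $\mathcal{A}_\top$ without being DFA-bisimilar to it. So closing clause~2 genuinely requires either the absorbing-terminal convention you gesture at or the current-state reading of the reward that the paper's proof implicitly adopts. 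Aside from this point---which you identify and the paper elides---your proposal and the paper's proof coincide.
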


The proof is given in the appendix.
Essentially, \Cref{lemma:no-bisim} shows that \Cref{defn:dfa-bisim} and \Cref{defn:bisim-mdp} are equivalent for DFAs in a DFA space, which will later help us with \Cref{problem:main}.
We will continue with the formal definition of a bisimulation metric.
But before doing so, informally, a function \( d: X \times X \to \mathbb{R}_{\geq 0} \) is called a \emph{pseudometric} on a set \( X \) if it satisfies non-negativity, symmetry, and the triangle inequality but may allow \( d(x, y) = 0 \) for \( x \neq y \), see \Cref{sec:metric} for a formal definition.

\begin{my_definition}[Bisimulation Metric]\label{defn:bisim-metric}
Let $\mathcal{M} = \langle S, A, P, R, \iota, \gamma \rangle$ be an MDP.
A pseudometric $d$ is called a \textbf{bisimulation metric} if the equivalence relation induced by $d$ is a bisimulation relation, i.e.,
\[
    \sim_{\mathcal{M}} = \{(s,t) \in S \times S \mid_L d(s, t) = 0\}.
\]
\end{my_definition}

Recall that a given DFA space $D_{\Sigma, n}$ induces a deterministic MDP $\mathcal{M}_{D_{\Sigma, n}}$, where each state of this MDP is a DFA.
We want our learned encoder to uniquely distinguish between different behaviors, but we do not care whether we can distinguish between different representations of the same task.
Therefore, we can use a bisimulation metric to measure \emph{how bisimilar} two DFAs are and utilize this idea to learn a provably correct embedding space by ensuring that if two DFAs are not bisimilar, then they must have different embeddings.
To this end, we first present the following result stating that bisimulation metrics over deterministic MDPs can be computed as fixed-point solutions.

\begin{restatable}{thm}{operator}\label{thm:operator}
    Let $\mathcal{M} = \langle S, A, T, R, \iota, \gamma \rangle$ be a deterministic MDP.
    Define operators:
    \begin{align*}
        d^{k}(s, t) &\gets \left| R(s, \pi^{k}(s, t)) - R(t, \pi^{k}(s, t)) \right| + \gamma\, d^{k - 1} \left( T(s, \pi^{k}(s, t)),\, T(t, \pi^{k}(s, t)) \right), \\
        \pi^k(s, t) &\gets \arg\max_{a \in A} \left\{ \left| R(s,a) - R(t,a) \right| + \gamma\, d^{k - 1} \left( T(s, a),\, T(t, a) \right) \right\}.
    \end{align*}
    Then, there exists unique fixed points $d^*$ and $\pi^*$, and $d^*$ is a bisimulation metric.
\end{restatable}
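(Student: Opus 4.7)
The plan is to recast the coupled recursion as a single Bellman-style operator on pseudometrics and then apply Banach's fixed point theorem. Define
\[
\mathcal{F}(d)(s,t) = \max_{a \in A}\bigl\{\, | R(s,a) - R(t,a) | + \gamma\, d(T(s,a), T(t,a)) \,\bigr\},
\]
so that the recursion in the statement is exactly $d^{k} = \mathcal{F}(d^{k-1})$, and $\pi^{k}(s,t)$ is the greedy selector attaining the maximum. Uniqueness of $d^*$ will come from contraction; $\pi^*$ is then the greedy selector at $d^*$ (unique up to ties in the argmax).

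The first step is to set the stage. Let $\mathcal{P}$ be the space of bounded pseudometrics $d: S \times S \to [0, R_{\max}/(1-\gamma)]$ equipped with the uniform metric $\|d_1 - d_2\|_\infty = \sup_{s,t}|d_1(s,t) - d_2(s,t)|$. I would verify that $(\mathcal{P}, \|\cdot\|_\infty)$ is a complete metric space, since the uniform limit of pseudometrics remains a pseudometric, and check that $\mathcal{F}$ maps $\mathcal{P}$ into itself. Non-negativity and symmetry of $\mathcal{F}(d)$ are immediate. The triangle inequality uses the triangle inequality for $|\cdot|$ and for $d$ together with $\max_a (x_a + y_a) \le \max_a x_a + \max_a y_a$ applied over a common action.

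The second step is to show $\mathcal{F}$ is a $\gamma$-contraction. Using the elementary bound $|\max_a f(a) - \max_a g(a)| \le \max_a |f(a) - g(a)|$,
\[
|\mathcal{F}(d_1)(s,t) - \mathcal{F}(d_2)(s,t)| \le \gamma \max_{a \in A} \bigl| d_1(T(s,a),T(t,a)) - d_2(T(s,a),T(t,a)) \bigr| \le \gamma\|d_1 - d_2\|_\infty.
\]
Banach's fixed point theorem then yields a unique $d^* \in \mathcal{P}$ with $\mathcal{F}(d^*) = d^*$, and $\pi^*(s,t) \in \arg\max_a \{|R(s,a) - R(t,a)| + \gamma\, d^*(T(s,a), T(t,a))\}$ is the associated greedy selector. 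As a bonus, iterating from any $d^0 \in \mathcal{P}$ converges geometrically to $d^*$, justifying the notation.

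Finally, I would show $d^*$ is a bisimulation metric by proving that the zero-set $B = \{(s,t) : d^*(s,t) = 0\}$ coincides with the deterministic-MDP specialization of $\sim_{\mathcal{M}}$ from \Cref{defn:bisim-mdp}. If $d^*(s,t) = 0$, then the max defining $\mathcal{F}(d^*)(s,t)$ is zero, so for every $a$ both $|R(s,a) - R(t,a)| = 0$ and $d^*(T(s,a), T(t,a)) = 0$; hence $B$ preserves rewards and is closed under transitions, i.e., it is a bisimulation. Conversely, starting the recursion from $d^0 \equiv 0$, a straightforward induction on $k$ shows $d^{k}(s,t) = 0$ whenever $s \sim_{\mathcal{M}} t$, and passing to the limit gives $d^*(s,t) = 0$. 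The main subtlety I expect is the uniqueness claim for $\pi^*$: in general the argmax is not single-valued, so $\pi^*$ must be read as unique modulo tie-breaking (or as a set-valued map equal to the greedy set at $d^*$). A secondary technical point is verifying the triangle inequality is preserved by $\mathcal{F}$, which is the only non-trivial piece of $\mathcal{F}(\mathcal{P}) \subseteq \mathcal{P}$.
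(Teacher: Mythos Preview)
Your proof is correct and more self-contained than the paper's. The paper does not prove the theorem from first principles; instead it assembles three citations: (i) \cite{ferns2004metrics} for the existence of a unique fixed point of the Wasserstein-based operator and the fact that this fixed point is a bisimulation metric, (ii) \cite{castro2020scalable} for the observation that in a deterministic MDP the Wasserstein term collapses to $d(T(s,a),T(t,a))$, and (iii) \cite{zhang2020learning} for the claim that splitting the update into a distance step and a simultaneously improving policy step still yields a unique fixed point. The paper then notes that since the limiting policy is the $\arg\max$ selector, the resulting fixed point is the full bisimulation metric rather than merely a $\pi$-bisimulation metric.

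Your route is genuinely different: you collapse the coupled recursion back into a single Bellman-type operator $\mathcal{F}$ on pseudometrics, prove directly that $\mathcal{F}$ is a $\gamma$-contraction on a complete space, and verify by hand that the zero-set of the fixed point is exactly $\sim_{\mathcal{M}}$. This is more elementary and avoids the dependency chain on prior results; it also makes the geometric convergence rate explicit. What the paper's approach buys is a cleaner connection to the practical training setup in \Cref{sec:bisim}, where $\pi$ is not the exact $\arg\max$ but a PPO-trained approximation, and the Zhang et al.\ result about a ``continuously improving'' policy is the relevant tool. For the theorem as literally stated, though, your direct argument is tighter. Your caveat about $\pi^*$ being unique only modulo ties is well taken; the paper simply asserts uniqueness without qualification. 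One small point: your construction of $\mathcal{P}$ implicitly assumes rewards are bounded (so that $R_{\max}/(1-\gamma)$ is finite), which the paper's general MDP definition does not guarantee, but this is a standard assumption in the cited literature and holds in the paper's concrete setting.
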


The proof is given in the appendix. \Cref{thm:operator} is essentially an adaptation of the results previously given in this domain to our setting.
Specifically, \cite{ferns2004metrics} first proved that a bisimulation metric can be computed as a unique fixed point of a contraction map.
\cite{castro2020scalable} later showed special cases of this result for deterministic MDPs and for on-policy variants where actions are given by a policy.
\cite{zhang2020learning} then presented a result showing that one can learn a bisimulation metric jointly while learning a control policy predicting actions for a downstream task.
We combine these results to show that a bisimulation metric can be computed while simultaneously learning a policy maximizing it.
Given $\pi^*$, a bisimulation metric $d^{*}$ can be computed up to an $\alpha$ accuracy by iteratively applying the operator $\left\lceil\frac{\ln \alpha}{\ln \gamma}\right\rceil$ times, with an overall complexity of $O\left( |A| |S|^4 \log |S| \frac{\ln \alpha}{\ln \gamma} \right)$.
%

\subsection{Learning Automata Embeddings by Computing Bisimulation Metrics}


Given a DFA space $D_{\Sigma, n}$, to learn an encoder $\phi : D_{\Sigma, n} \to \mathcal{Z}$ that solves \Cref{problem:main}, we train it to learn latent representations s.t. their normalized $\ell_2$-norms form a bisimulation metric.
To do so, we use the operators from \Cref{thm:operator} and define our pseudometric as follows:
\begin{align}\label{eqn:metric}
    d(\mathcal{A}, \mathcal{A}') \triangleq \| \hat{\phi}(\mathcal{A}) - \hat{\phi}(\mathcal{A}') \|_2,
\end{align}
where $\hat{\phi}(\mathcal{A}) = \frac{\phi(\mathcal{A})}{\| \phi(\mathcal{A}) \|_2}$ denotes vector normalization.
Since it is hard to compute the $\arg\max$ in \Cref{thm:operator}, we simultaneously learn a policy $\pi: \mathcal{Z} \times \mathcal{Z} \to \Delta(\Sigma)$ approximating it in the latent space.
We generate episodes starting from $\mathcal{A}_0, \mathcal{A}_0' \sim \mathcal{D}_{\Sigma, n}$ and evolving as follows:
\[
    a_{t} \sim (\pi \circ \phi)(\mathcal{A}_t, \mathcal{A}_t'),
    \quad
    \mathcal{A}_{t + 1} = T_{D_{\Sigma, n}}(\mathcal{A}_t, a_t),
    \quad
    \mathcal{A}_{t + 1}' = T_{D_{\Sigma, n}}(\mathcal{A}_t', a_t),
\]
where $(\pi \circ \phi)(\mathcal{A}_t, \mathcal{A}_t') = \pi(\phi(\mathcal{A}_t), \phi(\mathcal{A}_t'))$.
We use Proximal Policy Optimization (PPO) by \cite{schulman2017proximal} to jointly learn $\pi$ and $\phi$ with the following objective:
\begin{align}\label{eqn:obj}
    J_{D_{\Sigma, n}}(\pi \circ \phi)
    =
    J_{\text{clip}}(\pi \circ \phi)
    +
    J_{\text{val}}(\phi),
\end{align}
where $J_{\text{clip}}(\pi \circ \phi)$ is the clipped surrogate objective computed using \Cref{eqn:metric} as its value function, i.e., $V_t = d(\mathcal{A}_t, \mathcal{A}_t')$.
The details of $J_{\text{clip}}(\pi \circ \phi)$ and PPO are not relevant to us; however, note that while it is not guaranteed, it usually finds the optimal solution, see \cite{schulman2017proximal} for details.
The second term in the objective given in \Cref{eqn:obj}, the value objective, is defined as follows:
\begin{align*}
    J_{\text{val}}(\phi)
    &=
    -
    \left(
    V_t - \left( \left| R_{D_{\Sigma, n}}(\mathcal{A}_t, a_t) - R_{D_{\Sigma, n}}(\mathcal{A}_t', a_t) \right| + \gamma\bar V_{t + 1}\right)
    \right)^2
    \\
    &=
    -
    \left(
    d(\mathcal{A}_t, \mathcal{A}_t') - \left(\left|R_{D_{\Sigma, n}}(\mathcal{A}_t, a_t) - R_{D_{\Sigma, n}}(\mathcal{A}_t', a_t)\right| + \gamma \bar{d}(\mathcal{A}_{t + 1}, \mathcal{A}_{t + 1}') \right)
    \right)^2,
\end{align*}
where $\bar{V}_{t + 1}$ and $\bar{d}(\mathcal{A}_{t + 1}, \mathcal{A}_{t + 1}')$ denotes calls with stop gradients, i.e., no gradient flow to $\phi$.
$J_{\text{val}}(\phi)$ implements the objective for the pseudometric given in \Cref{thm:operator}, penalizing for diverging from the one-step lookahead target.
The combined objective of $\phi$ is then to learn latent representations that form a bisimulation metric under normalized $\ell_2$-norm while also providing representations for $\pi$.
Next, we show that an encoder maximizing \Cref{eqn:obj} maps two DFAs to the same embedding if and only if they are bisimilar, therefore proving that such encoders can distinguish distinct tasks.

\begin{restatable}{lmm}{unique}\label{thm:unique}
Let \(D_{\Sigma, n}\) be a DFA space, $\phi^*$ be an encoder, and $\pi^*$ be a policy s.t.
\[
    \pi^* \circ \phi^* = \arg\max_{\pi \circ \phi} J_{D_{\Sigma, n}}(\pi \circ \phi),
\]
where $J_{D_{\Sigma, n}}(\pi \circ \phi)$ is given by \Cref{eqn:obj}.
Then, $\phi^*$ satisfies:
\[
\forall \mathcal{A}, \mathcal{A}' \in D_{\Sigma, n} , \quad \mathcal{A} \sim \mathcal{A}' \iff \phi^*(\mathcal{A}) = \phi^*(\mathcal{A}').
\]
\end{restatable}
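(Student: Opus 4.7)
The plan is to pass through the unique bisimulation metric $d^{*}$ given by \Cref{thm:operator} and then invoke \Cref{lemma:no-bisim} to bridge back to DFA bisimilarity. Concretely, I want to argue that at the joint maximizer $(\pi^{*}, \phi^{*})$, the pseudometric $d$ defined by \Cref{eqn:metric} coincides with the unique bisimulation metric $d^{*}$ on the deterministic MDP $\mathcal{M}_{D_{\Sigma, n}}$, and then read off the two directions of the iff separately, using canonicality of minimized DFAs for the forward direction and the definition of a bisimulation metric for the reverse direction.

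First, I would decompose the objective in \Cref{eqn:obj}. The clipped surrogate term $J_{\text{clip}}(\pi \circ \phi)$ uses $V_{t} = d(\mathcal{A}_{t}, \mathcal{A}_{t}')$ as its value and thus, at the joint optimum, pushes $\pi^{*}$ to select, for each pair $(\mathcal{A}, \mathcal{A}')$, the action that maximizes the one-step difference-plus-discounted-future cost, which is exactly the $\arg\max$ operator of \Cref{thm:operator}. The value term $J_{\text{val}}(\phi)$ is the squared Bellman residual for this same operator under the stop-gradient target. Setting the gradient of $J_{\text{val}}$ to zero at the optimum (combined with \Cref{assume:capacity} giving the encoder enough capacity to attain the residual minimum pointwise) forces $d$ to satisfy the fixed-point equation of \Cref{thm:operator} with action selector $\pi^{*}$. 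By the uniqueness of the fixed point asserted there, $d(\mathcal{A}, \mathcal{A}') = d^{*}(\mathcal{A}, \mathcal{A}')$ for all $\mathcal{A}, \mathcal{A}' \in D_{\Sigma, n}$, and $d^{*}$ is a bisimulation metric on $\mathcal{M}_{D_{\Sigma, n}}$.

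From here the reverse implication ($\phi^{*}(\mathcal{A}) = \phi^{*}(\mathcal{A}') \Rightarrow \mathcal{A} \sim \mathcal{A}'$) is immediate: equality of embeddings implies equality of their normalizations $\hat{\phi}^{*}$, hence $d(\mathcal{A}, \mathcal{A}') = 0$; by \Cref{defn:bisim-metric}, this places $(\mathcal{A}, \mathcal{A}')$ in $\sim_{\mathcal{M}_{D_{\Sigma, n}}}$, and \Cref{lemma:no-bisim} transports this back to $\mathcal{A} \sim \mathcal{A}'$ in the DFA sense of \Cref{defn:dfa-bisim}. For the forward implication, I would invoke \Cref{assume:mini-dfa} together with the remark on Hopcroft minimization: every element of $D_{\Sigma, n}$ is a minimized plan DFA, and minimized DFAs are canonical up to state isomorphism, so bisimilar minimized DFAs are isomorphic. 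Since $\phi^{*}$ is a function on the DFA space (viewed up to the canonical form, as is standard for GNN-based encoders that are permutation-invariant over the state set), isomorphic DFAs are mapped to the same embedding, yielding $\phi^{*}(\mathcal{A}) = \phi^{*}(\mathcal{A}')$.

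The main obstacle I expect is the step that identifies $d$ with the unique fixed point $d^{*}$. \Cref{thm:operator} presupposes that the action at each pair is chosen by the exact $\arg\max$, whereas here $\pi^{*}$ is a learned PPO policy and the two sub-objectives in \Cref{eqn:obj} are optimized jointly rather than in alternation; so I need to argue carefully that any joint maximizer makes $\pi^{*}$ pointwise optimal against the value function that $\phi^{*}$ induces, and that the capacity assumption rules out degenerate solutions where $J_{\text{val}}$ is traded off against $J_{\text{clip}}$. Everything else reduces to routine use of \Cref{lemma:no-bisim}, \Cref{defn:bisim-metric}, and the canonical-form remark.
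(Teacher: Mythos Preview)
Your overall strategy matches the paper's: identify the pseudometric $d$ of \Cref{eqn:metric} with the unique bisimulation metric $d^{*}$ from \Cref{thm:operator}, then invoke \Cref{defn:bisim-metric} and \Cref{lemma:no-bisim}. Your reverse implication is exactly the paper's argument. The forward implication, however, diverges. You appeal to \Cref{assume:mini-dfa} and Hopcroft canonicality to argue that bisimilar minimized DFAs are isomorphic, and then import an additional hypothesis---permutation invariance of the GNN encoder---to collapse isomorphic inputs to the same embedding. The paper instead stays entirely inside the bisimulation-metric framework: since $d^{*}$ is a bisimulation metric, \Cref{defn:bisim-metric} gives the \emph{equivalence} $\mathcal{A}\sim_{\mathcal{M}_{D_{\Sigma,n}}}\mathcal{A}' \Leftrightarrow d^{*}(\mathcal{A},\mathcal{A}')=0$, and \Cref{lemma:no-bisim} already bridges DFA and MDP bisimilarity in both directions, so $\mathcal{A}\sim\mathcal{A}'$ directly forces $d^{*}(\mathcal{A},\mathcal{A}')=0$ and hence $\phi^{*}(\mathcal{A})=\phi^{*}(\mathcal{A}')$. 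The paper's route is leaner because it needs no architectural assumption on $\phi^{*}$; your route, on the other hand, has the incidental virtue of bypassing the gap between $\hat{\phi}^{*}$ and $\phi^{*}$ (since $d^{*}=0$ a priori only equates the \emph{normalized} embeddings), because you argue equality of raw embeddings from the input side. The core machinery is the same in both, and the obstacle you flag---that the joint PPO optimum actually realizes the fixed point of \Cref{thm:operator}---is precisely the step the paper dispatches in a single sentence.
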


The proof is given in the appendix.
Observe that if our trained encoder can distinguish between DFAs that are not bisimilar, then it solves \Cref{problem:main}, as bisimilar DFAs are different representations for the same task--no need to distinguish them.
Next, we formally state this result, solving \Cref{problem:main}.

\begin{restatable}{thm}{main}\label{thm:main}
    Let \(D_{\Sigma, n}\) be a DFA space, $\phi$ be an encoder, and $\pi^*$ be a policy s.t.
    \[
        \pi^* \circ \phi^* = \arg\max_{\pi \circ \phi} J_{D_{\Sigma, n}}(\pi \circ \phi),
    \]
    where $J_{D_{\Sigma, n}}(\pi \circ \phi)$ is given by \Cref{eqn:obj}.
    Then, $\phi^*$ solves \Cref{problem:main}.
\end{restatable}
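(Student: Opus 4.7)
The plan is to combine \Cref{thm:unique}, which says $\phi^*$ separates exactly the non-bisimilar DFAs, with \Cref{lemma:no-bisim}, which identifies DFA bisimulation with MDP bisimulation in $\mathcal{M}_{D_{\Sigma, n}}$. The bridge is to lift this equivalence from the induced MDP to the full cascade $\mathcal{M} \mid \mathcal{M}_{D_{\Sigma, n}}$, and then use bisimulation-invariance of optimal policies to exhibit an optimal DFA-conditioned policy that factors through $\phi^*$.

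First I would show that whenever $\mathcal{A} \sim \mathcal{A}'$, the product states $(s, \mathcal{A})$ and $(s, \mathcal{A}')$ are bisimilar in $\mathcal{M} \mid \mathcal{M}_{D_{\Sigma, n}}$ for every $s \in S$. This is immediate from \Cref{defn:dfa-mdp}: both $R_{\mathcal{M} \mid \mathcal{M}_{D_{\Sigma, n}}}$ and the DFA component of $P_{\mathcal{M} \mid \mathcal{M}_{D_{\Sigma, n}}}$ depend on the DFA argument only through $T_{D_{\Sigma, n}}(\cdot, L(s'))$, which preserves bisimulation by \Cref{lemma:no-bisim}; the $S$-component is independent of the DFA slot; so the relation $\{((s,\mathcal{A}),(s,\mathcal{A}')) : s \in S,\ \mathcal{A} \sim \mathcal{A}'\}$ satisfies the conditions of \Cref{defn:bisim-mdp}.

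Second, I would invoke the standard fact that bisimilar MDP states share the same optimal value function, and hence share at least one common optimal action, so one can pick a canonical optimal action per bisimulation class. This yields a DFA-conditioned policy $\pi^\dagger$ that is optimal for $J_{\mathcal{M} \mid \mathcal{M}_{D_{\Sigma, n}}}$ and satisfies $\pi^\dagger(s, \mathcal{A}) = \pi^\dagger(s, \mathcal{A}')$ whenever $\mathcal{A} \sim \mathcal{A}'$. By \Cref{thm:unique}, the fibers of $\phi^*$ are exactly the bisimulation classes, so $\pi^\dagger$ descends to a well-defined map
\[
\pi_\mathcal{Z}(s, \phi^*(\mathcal{A})) \triangleq \pi^\dagger(s, \mathcal{A}),
\]
and by construction $\pi_\mathcal{Z} \circ \phi^* = \pi^\dagger$. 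The converse inclusion is free: every composite $\pi_\mathcal{Z} \circ \phi^*$ is itself a valid DFA-conditioned policy, so
\[
\max_{\pi_\mathcal{Z}} J_{\mathcal{M} \mid \mathcal{M}_{D_{\Sigma, n}}}(\pi_\mathcal{Z} \circ \phi^*) \le \max_{\pi} J_{\mathcal{M} \mid \mathcal{M}_{D_{\Sigma, n}}}(\pi),
\]
and the previous construction attains the right-hand side, forcing equality and placing $\pi^\dagger = \pi_\mathcal{Z} \circ \phi^*$ in both $\arg\max$ sets.

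The main obstacle I foresee is the second step: rigorously promoting bisimulation equivalence of $(s, \mathcal{A})$ and $(s, \mathcal{A}')$ to the existence of a single optimal policy that agrees on both. Equality of optimal values on bisimilar states is standard, but one must additionally argue that the sets of greedy actions under the optimal Bellman operator coincide on bisimilar states so that a canonical selector exists; this is where the proof needs care, since the problem asks for an $\arg\max$-level equivalence rather than just a $\max$-level one. Everything after that---pushing $\pi^\dagger$ through $\phi^*$ using \Cref{thm:unique} and closing the two-sided inequality---is straightforward bookkeeping under \Cref{assume:capacity}.
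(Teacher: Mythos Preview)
Your proposal is correct and follows the same approach as the paper: invoke \Cref{thm:unique} to conclude that $\phi^*$ separates exactly the non-bisimilar DFAs, and hence that conditioning on $\phi^*(\mathcal{A})$ loses no task-relevant information, so the DFA-conditioned problem of \Cref{defn:dfa-rl} can be equivalently posed over $\mathcal{Z}$. The paper's own proof is a two-line sketch of precisely this; your lifting of bisimulation to the cascade $\mathcal{M}\mid\mathcal{M}_{D_{\Sigma,n}}$ and the bisimulation-invariance-of-optimal-policies step simply make explicit what the paper leaves implicit (and under \Cref{assume:mini-dfa} the lifting is essentially vacuous, since bisimilar minimized DFAs already coincide in $D_{\Sigma,n}$).
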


The proof is given in the appendix.
Intuitively, since our encoder can distinguish bisimilar DFAs and bisimilar DFAs represent the same task, one can equivalently reformulate the DFA-conditioned RL problem given in \Cref{defn:dfa-rl}, which is defined over $D_{\Sigma, n}$, as one over $\mathcal{Z}$, solving \Cref{problem:main}.

\section{Experiments}\label{sec:exp}

We implemented the technique given in \Cref{sec:bisim} using a GATv2 model as our DFA encoder and Reach-Avoid Derived (RAD) DFAs with at most $10$ states, which are plan DFAs, both presented in \cite{yalcinkaya24compositional}.
One difference in our GATv2 model is that given a DFA with $n$ states, we do $n$ message-passing steps, instead of doing it for a fixed number as in \cite{yalcinkaya24compositional}. 
To break the symmetry, caused by taking the absolute value of the reward difference, we trained the policy using the reward difference without the absolute value.
\Cref{fig:bisim-curves} shows that our training technique finds the fixed point, where the objectives from \Cref{sec:bisim} are given as losses.
We then tested the accuracy as well as the generalization capabilities of these DFA embeddings.
To do so, we generated RAD, Reach (R), and Reah-Avoid (RA) DFAs.
During training the number of states of a RAD DFA was sampled from a truncated geometric distribution (with $10$ as the upper bound) whereas during testing we sampled it using a bounded uniform distribution.
We also generated out-of-distribution (OOD) DFAs with the number of states sampled uniformly between $11$ and $20$.
We computed bisimulation metrics, i.e., the normalized $\ell_2$-norms, between the embeddings of these DFAs.
\Cref{fig:heatmap} gives these results in the form of a heatmap, demonstrating the correctness of the learned DFA embeddings--$0$ on the diagonal.
We further checked whether any of these sampled DFAs (both in-distribution and OOD ones) are mapped to the same embedding (up to a $10^{-8}$ accuracy) or not, and we confirm that the encoder has a $100\%$ success rate in these samples.
\Cref{fig:policy} compares our new pretraining technique from \Cref{sec:bisim} with our previous pretraining procedure based on solving DFAs (\cite{yalcinkaya24compositional}), which does not guarantee correctness, showing that the correctness of DFA embeddings improves downstream policy learning.
All results are over 5 seeds.

\begin{figure*}[t]
    \centering
    \subfigure[Learning curves] 
    {
        \includegraphics[width=0.3\linewidth]{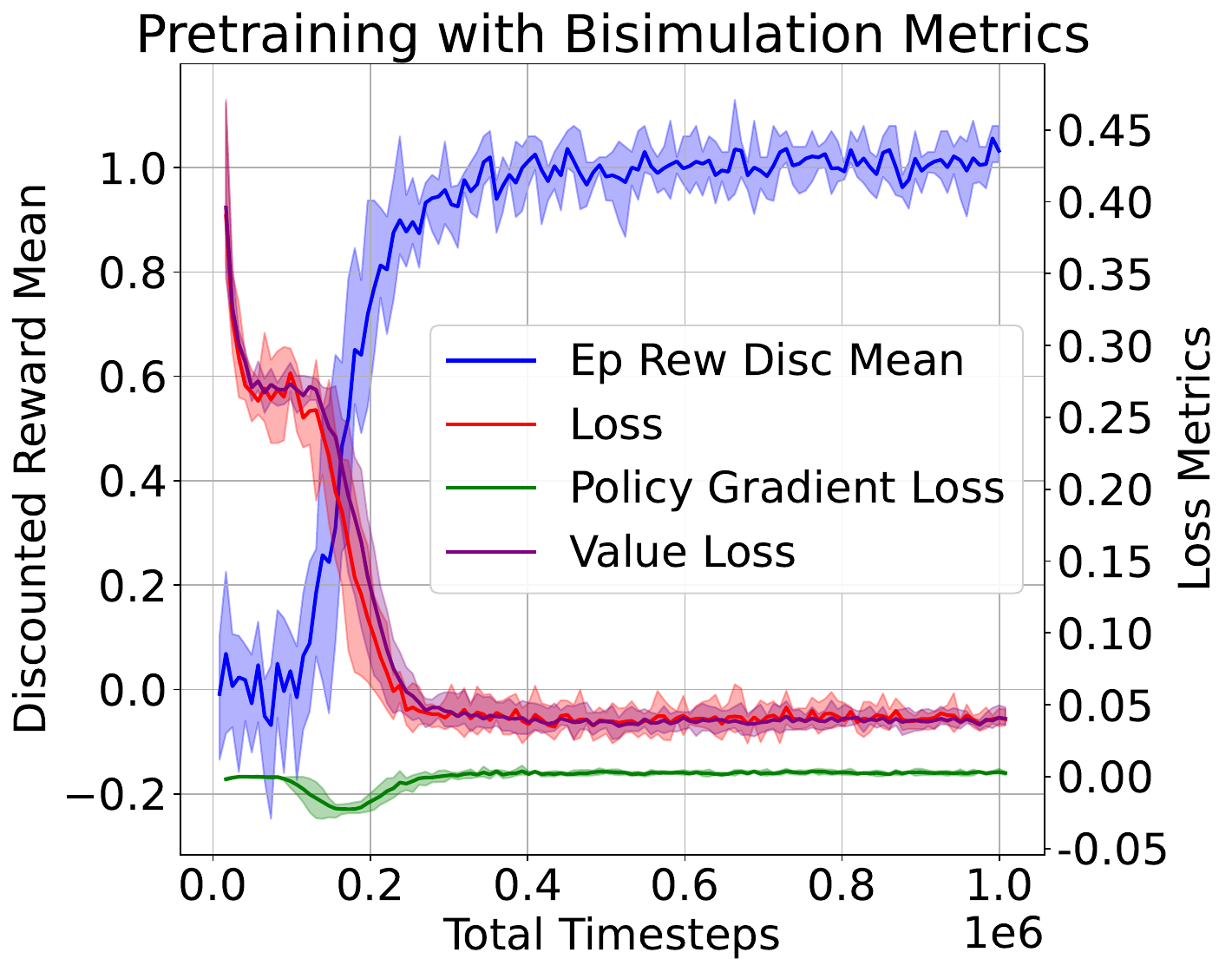}
        \label{fig:bisim-curves}
    }%
    \subfigure[Normalized $\ell_2$ distances] 
    {
        \includegraphics[width=0.3\linewidth]{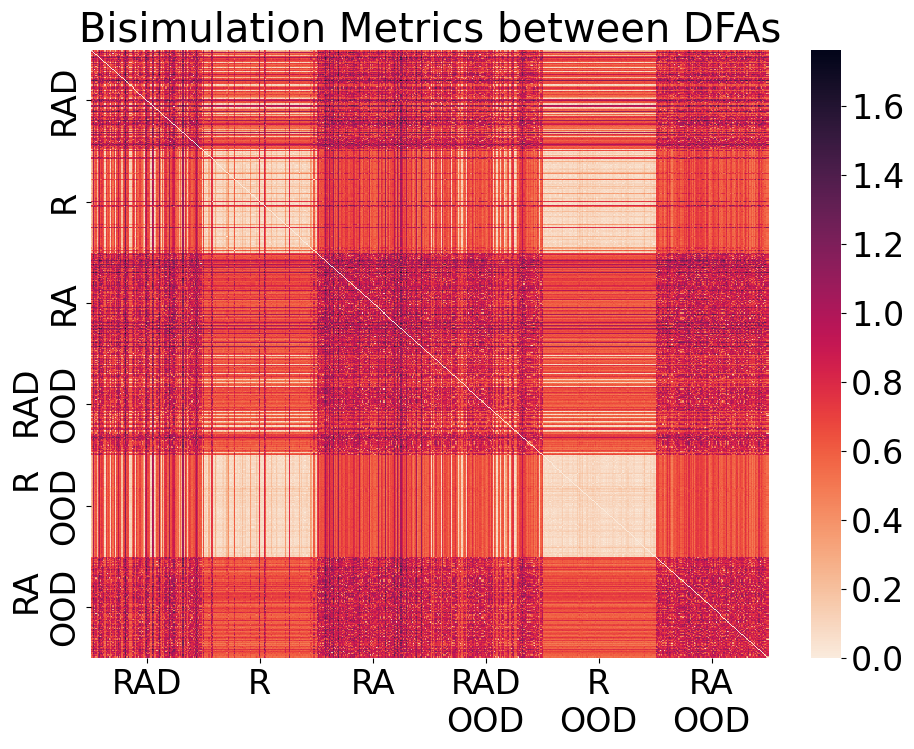}
        \label{fig:heatmap}
    }%
    \subfigure[DFA-condition policies] 
    {
        \includegraphics[width=0.3\linewidth]{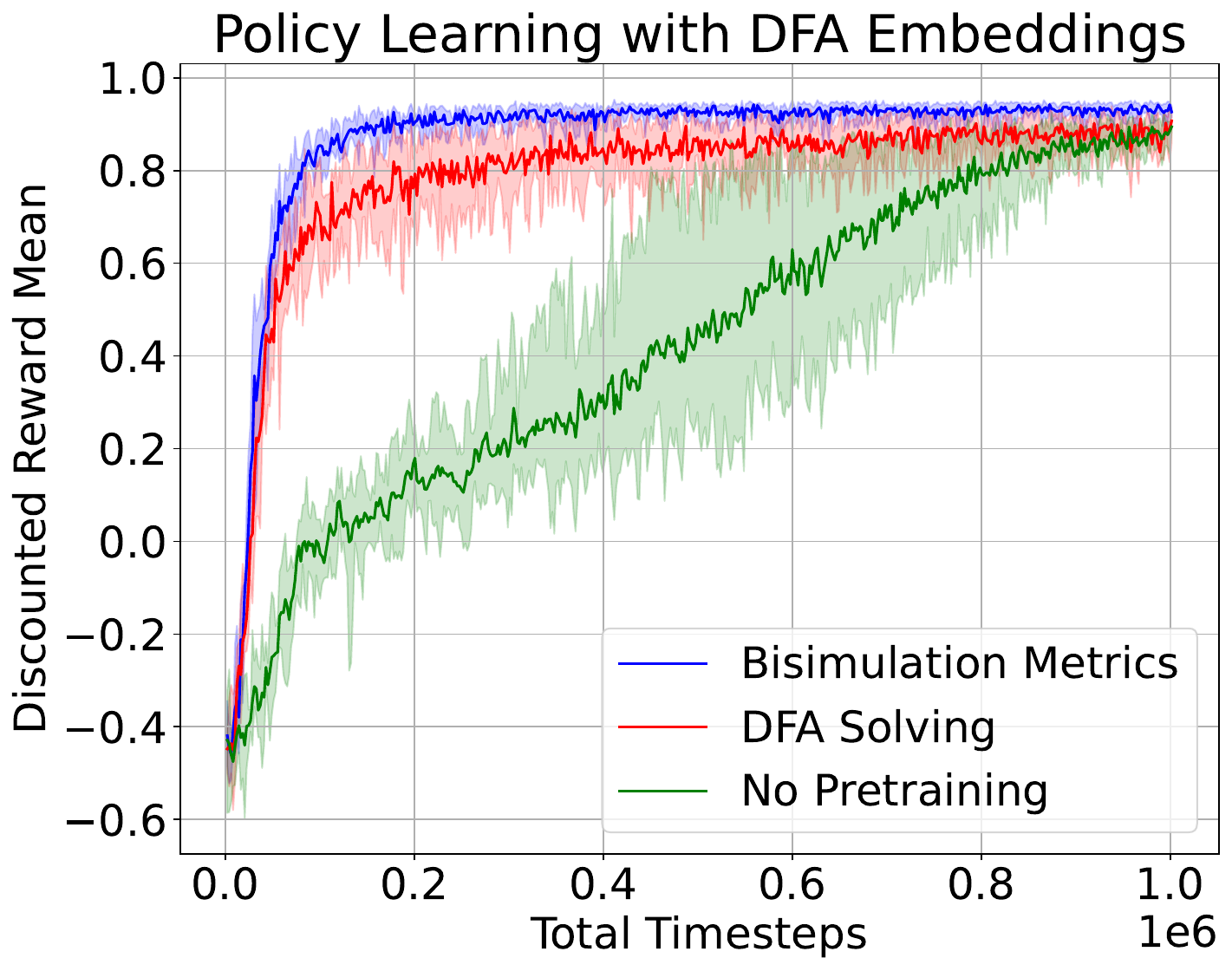}
        \label{fig:policy}
    }
    \caption{Left: learning curves for the method in \Cref{sec:bisim}. Center: heatmap of \Cref{eqn:metric} between various DFAs. Right: learning DFA-conditioned policies for RA tasks with number of states~sampled uniformly from $[3,6]$, comparing DFA embeddings from \Cref{sec:bisim} and \cite{yalcinkaya24compositional}.}
    \label{fig:photos}
\end{figure*}

\section{Conclusion}
In this work, we established a theoretical framework for DFA-conditioned RL and showed its PAC-learnability. We then introduced a method for learning provably correct automata embeddings, ensuring optimal multi-task policy learning. Our approach builds on the promising results of DFA-conditioned RL, leveraging pretrained and frozen DFA embeddings to enable the learning of policies for temporally extended objectives specified at runtime. Our experimental evaluation confirms the theoretical guarantees of our method, demonstrating DFA-embeddings enable optimal multi-task RL.

\acks{This work is partially supported by the DARPA contract FA8750-23-C-0080 (ANSR), by Nissan and Toyota under the iCyPhy Center, and by C3DTI.
Niklas Lauffer is supported by an NSF fellowship.}

\bibliography{references}

\begin{thebibliography}{22}
\providecommand{\natexlab}[1]{#1}
\providecommand{\url}[1]{\texttt{#1}}
\expandafter\ifx\csname urlstyle\endcsname\relax
  \providecommand{\doi}[1]{doi: #1}\else
  \providecommand{\doi}{doi: \begingroup \urlstyle{rm}\Url}\fi

\bibitem[Alur et~al.(2022)Alur, Bansal, Bastani, and Jothimurugan]{alur2022framework}
Rajeev Alur, Suguman Bansal, Osbert Bastani, and Kishor Jothimurugan.
\newblock A framework for transforming specifications in reinforcement learning.
\newblock In \emph{Principles of Systems Design: Essays Dedicated to Thomas A. Henzinger on the Occasion of His 60th Birthday}, pages 604--624. Springer, 2022.

\bibitem[Alur et~al.(2023)Alur, Bastani, Jothimurugan, Perez, Somenzi, and Trivedi]{alur2023policy}
Rajeev Alur, Osbert Bastani, Kishor Jothimurugan, Mateo Perez, Fabio Somenzi, and Ashutosh Trivedi.
\newblock Policy synthesis and reinforcement learning for discounted ltl.
\newblock In \emph{International Conference on Computer Aided Verification}, pages 415--435. Springer, 2023.

\bibitem[Black et~al.(2024)Black, Brown, Driess, Esmail, Equi, Finn, Fusai, Groom, Hausman, Ichter, et~al.]{black2024pi_0}
Kevin Black, Noah Brown, Danny Driess, Adnan Esmail, Michael Equi, Chelsea Finn, Niccolo Fusai, Lachy Groom, Karol Hausman, Brian Ichter, et~al.
\newblock $pi\_0 $: A vision-language-action flow model for general robot control.
\newblock \emph{arXiv preprint arXiv:2410.24164}, 2024.

\bibitem[Brody et~al.(2021)Brody, Alon, and Yahav]{brody2021attentive}
Shaked Brody, Uri Alon, and Eran Yahav.
\newblock How attentive are graph attention networks?
\newblock \emph{arXiv preprint arXiv:2105.14491}, 2021.

\bibitem[Brohan et~al.(2022)Brohan, Brown, Carbajal, Chebotar, Dabis, Finn, Gopalakrishnan, Hausman, Herzog, Hsu, et~al.]{brohan2022rt}
Anthony Brohan, Noah Brown, Justice Carbajal, Yevgen Chebotar, Joseph Dabis, Chelsea Finn, Keerthana Gopalakrishnan, Karol Hausman, Alex Herzog, Jasmine Hsu, et~al.
\newblock Rt-1: Robotics transformer for real-world control at scale.
\newblock \emph{arXiv preprint arXiv:2212.06817}, 2022.

\bibitem[Castro(2020)]{castro2020scalable}
Pablo~Samuel Castro.
\newblock Scalable methods for computing state similarity in deterministic markov decision processes.
\newblock In \emph{Proceedings of the AAAI Conference on Artificial Intelligence}, volume~34, pages 10069--10076, 2020.

\bibitem[Ferns et~al.(2004)Ferns, Panangaden, and Precup]{ferns2004metrics}
Norm Ferns, Prakash Panangaden, and Doina Precup.
\newblock Metrics for finite markov decision processes.
\newblock In \emph{UAI}, volume~4, pages 162--169, 2004.

\bibitem[Hopcroft(1971)]{hopcroft1971n}
John Hopcroft.
\newblock An n log n algorithm for minimizing states in a finite automaton.
\newblock In \emph{Theory of machines and computations}, pages 189--196. Elsevier, 1971.

\bibitem[Jothimurugan et~al.(2021)Jothimurugan, Bansal, Bastani, and Alur]{jothimurugan2021compositional}
Kishor Jothimurugan, Suguman Bansal, Osbert Bastani, and Rajeev Alur.
\newblock Compositional reinforcement learning from logical specifications.
\newblock \emph{Advances in Neural Information Processing Systems}, 34:\penalty0 10026--10039, 2021.

\bibitem[Liu et~al.(2022)Liu, Zhu, and Zhang]{liu2022goal}
Minghuan Liu, Menghui Zhu, and Weinan Zhang.
\newblock Goal-conditioned reinforcement learning: Problems and solutions.
\newblock \emph{arXiv preprint arXiv:2201.08299}, 2022.

\bibitem[Qiu et~al.(2023)Qiu, Mao, and Zhu]{qiu2023instructing}
Wenjie Qiu, Wensen Mao, and He~Zhu.
\newblock Instructing goal-conditioned reinforcement learning agents with temporal logic objectives.
\newblock \emph{Advances in Neural Information Processing Systems}, 36:\penalty0 39147--39175, 2023.

\bibitem[Ren et~al.(2025)Ren, Sundaresan, Sadigh, Choudhury, and Bohg]{ren2025motion}
Juntao Ren, Priya Sundaresan, Dorsa Sadigh, Sanjiban Choudhury, and Jeannette Bohg.
\newblock Motion tracks: A unified representation for human-robot transfer in few-shot imitation learning.
\newblock \emph{arXiv preprint arXiv:2501.06994}, 2025.

\bibitem[Rocamonde et~al.(2023)Rocamonde, Montesinos, Nava, Perez, and Lindner]{rocamonde2023vision}
Juan Rocamonde, Victoriano Montesinos, Elvis Nava, Ethan Perez, and David Lindner.
\newblock Vision-language models are zero-shot reward models for reinforcement learning.
\newblock \emph{arXiv preprint arXiv:2310.12921}, 2023.

\bibitem[Schaul et~al.(2015)Schaul, Horgan, Gregor, and Silver]{schaul2015universal}
Tom Schaul, Daniel Horgan, Karol Gregor, and David Silver.
\newblock Universal value function approximators.
\newblock In \emph{International conference on machine learning}, pages 1312--1320. PMLR, 2015.

\bibitem[Schulman et~al.(2017)Schulman, Wolski, Dhariwal, Radford, and Klimov]{schulman2017proximal}
John Schulman, Filip Wolski, Prafulla Dhariwal, Alec Radford, and Oleg Klimov.
\newblock Proximal policy optimization algorithms.
\newblock \emph{arXiv preprint arXiv:1707.06347}, 2017.

\bibitem[Sontakke et~al.(2023)Sontakke, Zhang, Arnold, Pertsch, B{\i}y{\i}k, Sadigh, Finn, and Itti]{sontakke2023roboclip}
Sumedh Sontakke, Jesse Zhang, S{\'e}b Arnold, Karl Pertsch, Erdem B{\i}y{\i}k, Dorsa Sadigh, Chelsea Finn, and Laurent Itti.
\newblock Roboclip: One demonstration is enough to learn robot policies.
\newblock \emph{Advances in Neural Information Processing Systems}, 36:\penalty0 55681--55693, 2023.

\bibitem[Strehl et~al.(2006)Strehl, Li, Wiewiora, Langford, and Littman]{strehl2006pac}
Alexander~L Strehl, Lihong Li, Eric Wiewiora, John Langford, and Michael~L Littman.
\newblock Pac model-free reinforcement learning.
\newblock In \emph{Proceedings of the 23rd international conference on Machine learning}, pages 881--888, 2006.

\bibitem[Vaezipoor et~al.(2021)Vaezipoor, Li, Icarte, and Mcilraith]{vaezipoor2021ltl2action}
Pashootan Vaezipoor, Andrew~C Li, Rodrigo A~Toro Icarte, and Sheila~A Mcilraith.
\newblock Ltl2action: Generalizing ltl instructions for multi-task rl.
\newblock In \emph{International Conference on Machine Learning}, pages 10497--10508. PMLR, 2021.

\bibitem[Yalcinkaya et~al.(2023)Yalcinkaya, Lauffer, Vazquez-Chanlatte, and Seshia]{yalcinkaya2023automata}
Beyazit Yalcinkaya, Niklas Lauffer, Marcell Vazquez-Chanlatte, and Sanjit Seshia.
\newblock Automata conditioned reinforcement learning with experience replay.
\newblock In \emph{NeurIPS 2023 Workshop on Goal-Conditioned Reinforcement Learning}, 2023.

\bibitem[Yalcinkaya et~al.(2024)Yalcinkaya, Lauffer, Vazquez-Chanlatte, and Seshia]{yalcinkaya24compositional}
Beyazit Yalcinkaya, Niklas Lauffer, Marcell Vazquez-Chanlatte, and Sanjit~A Seshia.
\newblock Compositional automata embeddings for goal-conditioned reinforcement learning.
\newblock In \emph{The Thirty-eighth Annual Conference on Neural Information Processing Systems}, 2024.

\bibitem[Yang et~al.(2021)Yang, Littman, and Carbin]{yang2021tractability}
Cambridge Yang, Michael Littman, and Michael Carbin.
\newblock On the (in) tractability of reinforcement learning for ltl objectives.
\newblock \emph{arXiv preprint arXiv:2111.12679}, 2021.

\bibitem[Zhang et~al.(2020)Zhang, McAllister, Calandra, Gal, and Levine]{zhang2020learning}
Amy Zhang, Rowan McAllister, Roberto Calandra, Yarin Gal, and Sergey Levine.
\newblock Learning invariant representations for reinforcement learning without reconstruction.
\newblock \emph{arXiv preprint arXiv:2006.10742}, 2020.

\end{thebibliography}

\newpage
\appendix

\section{Goal-Conditioned Reinforcement Learning}\label{sec:gcrl}

Here, for reference, we present the standard Goal-Conditioned Reinforcement Learning (GCRL) problem. We start by defining the environment model for conditioning on \emph{goals}, usually given as sets of states or continuous regions, as done by~\cite{schaul2015universal,liu2022goal}.

\begin{my_definition}[Goal-Conditioned MDP]
A goal-conditioned MDP extends the standard MDP by incorporating a goal space given by a goal distribution $\mathcal{G} \in \Delta(2^S)$, where $\mathcal{G}$ is a distribution over sets of states, and therefore a goal is a set of states.
It is defined as the tuple
\[
\mathcal{M}_\mathcal{G} = \langle S, A, P, R_\mathcal{G}, \iota_{\mathcal{G}}, \gamma \rangle,
\]
where:
\begin{itemize}
    \item $R_\mathcal{G}: S \times A \times G \to \mathbb{R}$ is the goal-conditioned reward function, and
    \item $\iota_\mathcal{G} : S \times G \to [0, 1]$ is the initial state-goal distribution defined by $\iota_\mathcal{G}(s, g) = \iota(s) \mathcal{G}(g)$.
\end{itemize}
\end{my_definition}

Given a goal-conditioned MDP, the standard GCRL problem is to find a policy that achieves a given goal, which was first introduced by~\cite{schaul2015universal}.

\begin{my_definition}[Goal-Conditioned Reinforcement Learning]\label{defn:gcrl}
Given a goal-conditioned MDP $\mathcal{M}_\mathcal{G}$, the \textbf{Goal-Conditioned Reinforcement Learning} (GCRL) problem is to find a policy
\[
\pi: S \times G \to \Delta(A),
\]
which maps a state-goal pair $(s,g)$ to a distribution over actions, that maximizes the expected cumulative discounted reward:
\[
J_{\mathcal{M}_\mathcal{G}}(\pi) = \mathbb{E}_{s_0, g \sim \iota_\mathcal{G}} \left[ \sum_{t=0}^{s_t \in g} \gamma^t R_\mathcal{G}(s_t, a_t, g) \right],
\]
where trace \(\{(s_t, a_t)\}_{t\ge0}\) is generated by $a_{t} \sim \pi(s_t)$ and $s_{t+1} \sim P(s_t,a_t)$ until $s_t \in g$ is reached.
The objective is to solve
\[
\pi^* = \arg\max_{\pi} J_{\mathcal{M}_\mathcal{G}}(\pi).
\]
\end{my_definition}

The standard GCRL formulation doesn't inherently allow for specifying temporally extended tasks since the goals are defined as sets of states.
In theory, one can extend the state definition to a product state and specify temporal tasks within that product state; however, such approaches limit the scalability of the GCRL framework.
On the other hand, our DFA-conditioned RL formulation given in \Cref{defn:dfa-rl} allows for specifying temporal tasks and enables optimal multi-task policy learning.

\section{Pseudometrics and Metrics}\label{sec:metric}

\begin{my_definition}[Pseudometric and Metric]
Let \(X\) be a nonempty set. A function \(d: X \times X \to [0,\infty)\) is called a \textbf{pseudometric} on \(X\) if for all \(x, y, z \in X\) the following conditions hold:
\begin{enumerate}
    \item \textbf{Non-negativity:} \(d(x,y) \ge 0\).
    \item \textbf{Identity on the diagonal:} \(d(x,x) = 0\).
    \item \textbf{Symmetry:} \(d(x,y) = d(y,x)\).
    \item \textbf{Triangle Inequality:} \(d(x,z) \le d(x,y) + d(y,z)\).
\end{enumerate}
If \(d(x,y) = 0\) implies \(x = y\), then \(d\) is a \textbf{metric}.
\end{my_definition}

Essentially, a metric is a function measuring the distance between any two points in a space, satisfying non-negativity, symmetry, the triangle inequality, and it equals zero if and only if the two points are identical. A pseudometric, on the other hand, allows distinct points to have a distance of zero, meaning it might not fully distinguish between different points in the space.

\section{Proofs of Theorems and Lemmas}

\pac*
\begin{proof}
    A DFA-conditioned MDP $\mathcal{M} \mid_L \mathcal{M}_{D_{\Sigma, n}}$ is defined over the state space \(S \times D_{\Sigma,n}\), where \(D_{\Sigma,n}\) is a DFA space. Since \(D_{\Sigma,n}\) is finite (as all DFAs in \(D_{\Sigma,n}\) has a finite alphabet $\Sigma$ and at most $n$ states), the product state space has size \(|S|\cdot|D_{\Sigma,n}|\) and is an MDP. Thus, any PAC-MDP algorithm that works for MDPs with state space size \(|S|\) will also work on the product MDP with state space size \(|S|\cdot|D_{\Sigma,n}|\), with sample complexity increasing by at most a factor polynomial in \(|D_{\Sigma,n}|\).
\end{proof}

\nobisim*
\begin{proof}
    If $\mathcal{A} \sim \mathcal{A}'$, then they must agree on acceptance, by~\Cref{defn:dfa-bisim}.
    We have \(R_{D_{\Sigma, n}}(\mathcal{A}) = 1\) if and only if \(\mathcal{A} = \mathcal{A}_\top\).
    Since \(\mathcal{A}\) and \(\mathcal{A}'\) are bisimilar, \(\mathcal{A} = \mathcal{A}_\top \iff \mathcal{A}' = \mathcal{A}_\top\).
    The same reasoning for the $-1$ reward case gives \(R_{D_{\Sigma, n}}(\mathcal{A}) = R_{D_{\Sigma, n}}(\mathcal{A}')\), i.e.,
    \(\mathcal{A}\) and \(\mathcal{A}'\) satisfy reward equivalence in \(\mathcal{M}_{D_{\Sigma, n}}\).
    For any \(a \in \Sigma\), \(T_{D_{\Sigma, n}}(\mathcal{A}, a)\) results in a DFA bisimilar to \(T_{D_{\Sigma, n}}(\mathcal{A}', a)\) due to \Cref{defn:dfa-bisim}. By induction on the structure of \(\mathcal{A}\) and \(\mathcal{A}'\), their transitions preserve bisimilarity, satisfying the transition equivalence.
    Therefore, we have \(\mathcal{A} \sim \mathcal{A}' \implies \mathcal{A} \sim_{\mathcal{M}_{D_{\Sigma, n}}} \mathcal{A}'\).

    If $\mathcal{A} \sim_{\mathcal{M}_{D_{\Sigma, n}}} \mathcal{A}'$, then \(R_{D_{\Sigma, n}}(\mathcal{A}) = R_{D_{\Sigma, n}}(\mathcal{A}')\).
    Thus, $\mathcal{A}$ and $\mathcal{A}'$ must agree on acceptance by \Cref{defn:dfa-mdp}.
    For every \(a \in \Sigma\), \(T_{D_{\Sigma, n}}(\mathcal{A}, a) \sim_{\mathcal{M}_{D_{\Sigma, n}}} T_{D_{\Sigma, n}}(\mathcal{A}', a)\) by \Cref{defn:dfa-mdp}. By induction on the DFA transition structure (which is finite), \(T_{D_{\Sigma, n}}(\mathcal{A}, a)\) and \(T_{D_{\Sigma, n}}(\mathcal{A}', a)\) are bisimilar. As transitions under all \(a \in \Sigma\) preserve bisimilarity, the initial states \(q_0\) and \(q_0'\) must be related under the bisimulation relation. Thus, \(\mathcal{A}\) and \(\mathcal{A}'\) are bisimilar, i.e., \(\mathcal{A} \sim \mathcal{A}' \impliedby \mathcal{A} \sim_{\mathcal{M}_{D_{\Sigma, n}}} \mathcal{A}'\).
\end{proof}

\operator*
\begin{proof}
    Let $\mathcal{M} = \langle S, A, P, R, \iota, \gamma \rangle$ be an MDP. Define
    \[
        d^{k}(s, t)
        \gets
        \arg\max_{a\in A}
        \left\{
        \left|
        R(s, a) - R(t, a)
        \right|
        +
        \gamma
        \mathcal{W}_1(d)
        \left(
        P(s, a), P(t, a)
        \right)
        \right\},
    \]
    where $\mathcal{W}_1$ is the 1-Wasserstein metric.
    \cite{ferns2004metrics} showed that this operator has a unique fixed point $d^{*}$, and $d^{*}$ is a bisimulation metric.
    Later, \cite{castro2020scalable} proved that if $\mathcal{M}$ is deterministic, with transition function $T$, then the 1-Wasserstein metric above implies as follows:
    \[
        \mathcal{W}_1(d)
        \left(
        P(s, a), P(s, a)
        \right)
        =
        d
        \left(
        T(s, a), T(s, a)
        \right).
    \]
    We write the distance and the policy update separately since we want to learn the $\arg\max$--hard to compute directly.
    However, then we need to prove that the distance metric still has a unique fixed point when updated with actions from a policy being simultaneously learned.
    A useful result due to \cite{zhang2020learning} (which we present by combining with the result of \cite{castro2020scalable} and our notation) shows that given a continuously improving policy $\pi$, the following operator:
    \[
        d^{k}(s, t)
        \gets
        \left|
        R(s, \pi(s, t)) - R(t, \pi(s, t))
        \right|
        +
        \gamma
        d
        \left(
        T(s, \pi(s, t)), T(s, \pi(s, t))
        \right)
    \]
    has a unique fixed point $d^*$, and $d^*$ is a $\pi^*$-bisimulation metric.
    In our case, $\pi^*$ is the $\arg\max$ policy and therefore the unique fixed point $d^*$ is a bisimulation metric.

\end{proof}

\unique*
\begin{proof}
By \Cref{thm:operator}, $d^*(\mathcal{A}, \mathcal{A}') = \| \hat{\phi}^*(\mathcal{A}) - \hat{\phi}^*(\mathcal{A}') \|_2$ is a bisimulation metric.
Therefore, $d^*(\mathcal{A}, \mathcal{A}') = 0 \implies \mathcal{A} \sim_{\mathcal{M}_{D_{\Sigma, n}}} \mathcal{A}'$ by \Cref{defn:bisim-metric} and thus, by \Cref{lemma:no-bisim}, we have $\mathcal{A} \sim \mathcal{A}'$.
As $d^*(\mathcal{A}, \mathcal{A}') = 0$ implies $\phi^*(\mathcal{A}) = \phi^*(\mathcal{A}')$,
we have $\mathcal{A} \sim \mathcal{A}' \iff \phi^*(\mathcal{A}) = \phi^*(\mathcal{A}')$.
The forward direction is true since if $\mathcal{A} \sim \mathcal{A}'$, then $d^*(\mathcal{A}, \mathcal{A}') = 0$ by \Cref{defn:bisim-metric}; thus, $\phi^*(\mathcal{A}) = \phi^*(\mathcal{A}')$.
\end{proof}

\main*
\begin{proof}
    The optimal encoder $\phi^*$ maps two DFAs to the same latent representation if and only if they are bisimilar by \Cref{thm:unique}. So, every unique task in $D_{\Sigma, n}$ is represented in $\mathcal{Z}$.
    Therefore, the problem given in \Cref{defn:dfa-rl} can be equivalently reformulated over \( \mathcal{Z} \), which solves \Cref{problem:main}.
\end{proof}

\end{document}